\documentclass[]{article}
\usepackage{fullpage}
\usepackage[numbers]{natbib}
\usepackage{authblk}
\usepackage{xcolor}

\usepackage{optidef}
\usepackage[ruled,vlined]{algorithm2e}

\usepackage[normalem]{ulem}
\useunder{\uline}{\ul}{}
\usepackage{booktabs}
\usepackage{caption}
\usepackage{subcaption}
\usepackage{amsfonts} 
\usepackage{amsthm}

\newtheorem{definition}{Definition}
\newtheorem{theorem}{Theorem}

\newtheorem{assumption}{Assumption}
\newcommand{\mmFair}{\textsc{MinimaxFair}}
\newcommand{\mmFairCon}{\textsc{MinimaxFairRelaxed}}

\providecommand{\keywords}[1]
{
\noindent
\textbf{\textit{Keywords} --- } #1
}
\def\werm{{\sc WERM$(H)$\xspace}}

\newcommand\shortversion[1]{}

\AtBeginDocument{%
  \providecommand\BibTeX{{%
    \normalfont B\kern-0.5em{\scshape i\kern-0.25em b}\kern-0.8em\TeX}}}

\title{Minimax Group Fairness: Algorithms and Experiments}

\author[1,2]{Emily Diana}
\author[1,2]{Wesley Gill}
\author[1,2]{Michael Kearns}
\author[2]{Krishnaram Kenthapadi}
\author[1,2]{Aaron Roth}
\affil[1]{University of Pennsylvania}
\affil[2]{Amazon AWS AI}

\date{\today}

\begin{document}
\maketitle
\begin{abstract}
We consider a recently introduced framework in which fairness is measured by {\em worst-case\/} outcomes across groups, rather than by the more standard differences between group outcomes. In this framework we provide provably convergent {\em oracle-efficient\/} learning algorithms (or equivalently, reductions to non-fair learning) for {\em minimax group fairness\/}. Here the goal is that of minimizing the maximum loss across all groups, rather than equalizing group losses. Our algorithms apply to both regression and classification settings and support both overall error and false positive or false negative rates as the fairness measure of interest. They also support relaxations of the fairness constraints, thus permitting study of the tradeoff between overall accuracy and minimax fairness. We compare the experimental behavior and performance of our algorithms across a variety of fairness-sensitive data sets and show empirical cases in which minimax fairness is strictly and strongly preferable to equal outcome notions.
\end{abstract}

\keywords{Fair Machine Learning Algorithms, Minimax Fairness, Game Theory}

\section{Introduction}
\label{sec:introduction}
Machine learning researchers and practitioners have often focused on achieving group fairness with respect to protected attributes such as race, gender, or ethnicity. Equality of error rates is one of the most intuitive and well-studied group fairness notions,
and in enforcing it one often implicitly hopes that higher error rates on protected or ``disadvantaged'' groups will be reduced towards the lower error rate of the majority or ``advantaged'' group. 
But in practice, equalizing error rates and similar notions may require artificially inflating error on easier-to-predict groups --- without necessarily decreasing the error for the harder to predict groups --- and this 
may be undesirable for a variety of reasons.

For example, consider the many social applications of machine learning in which most or even all of the targeted
population is disadvantaged, such as predicting domestic situations in which children may be at risk of physical
or emotional harm \cite{chouldechova2018case}. 
While we might be interested in ensuring that our predictions are roughly equally accurate across racial groups, income levels,
or geographic location, if this can only be achieved by raising lower group error rates without lowering the error for any other population, then arguably we will have
only worsened overall social welfare, since this is not a setting where we can argue that we are ``taking from the rich and
giving to the poor.'' Similar arguments can be made in other high-stakes applications, such as predictive modeling
for medical care. In these settings it might be preferable to consider the alternative fairness criterion of achieving 
{\em minimax group error\/}, in which we seek not to equalize error rates, but to minimize the largest group error rate ---
that is, to make sure that {\em the worst-off group is as well-off as possible\/}.

Minimax group fairness, which was recently proposed by \cite{Martinez} in the context of classification, has the property that any model that
achieves it {\em Pareto dominates\/} (at least weakly, and possibly strictly) an equalized-error model with respect
to group error rates --- that is, if $g_i$ is the error rate on group $i$ in a minimax solution, and $g'$ is the 
common error rate for all groups in a solution that equalizes group error rates, then $g' \geq g_i$ for all $i$. If one or more of these inequalities is strict, it constitutes a proof
that equalized errors can only be achieved by deliberately inflating the error of one or more groups in the minimax solution. Put another way, one technique for finding an optimal solution subject to an equality of group error rates constraint is to first find a minimax solution, and then to artifically inflate the error rate on any group that does not saturate the minimax constraint --- an ``optimal algorithm'' that makes plain the deficiencies of equal error solutions.

In contrast to approaches that learn separate models for each protected group --- which also satisfy minimax error, simply by optimizing each group independently --- (e.g. \cite{decoupled1,decoupled2}), the minimax approach we use here has two key advantages:
\begin{itemize}
    \item The minimax approach does not require that groups of interest be disjoint, which is a requirement for the approach of learning a different model for each group. This allows for protecting groups defined by intersectional characteristics  as in \cite{Kearns,subgroup2}, protecting (for example) not just groups defined by race or gender alone, but also by combinations of race and gender.
    \item The minimax approach does not require that protected attributes be given as inputs to the trained model. This can be extremely important in domains (like credit and insurance) in which using protected attributes as features for prediction is illegal. 
\end{itemize}

\noindent Our primary contributions are as follows:
\begin{enumerate}
    \item First, we propose two algorithms. The first finds a minimax group fair model from a given statistical class, and the second navigates tradeoffs between a relaxed notion of minimax fairness and overall accuracy.
    \item Second, we prove that both algorithms converge and are oracle efficient --- meaning that they can be viewed as efficient reductions to the problem of unconstrained (nonfair) learning over the same class. We also study their generalization properties.
    \item Third, we show how our framework can be easily extended to incorporate different types of error rates -- such as false positive and negative rates -- and explain how to handle overlapping groups.
    \item Finally, we provide a thorough experimental analysis of our two algorithms under different prediction regimes. In this section, we focus on the following:
    \begin{itemize}
        \item We start with a demonstration of the learning process of  learning a fair predictor from the class of linear regression models. This setting matches our theory exactly, because weighted least squares regression is a convex problem, and so we really do have efficient subroutines for the unconstrained (nonfair) learning problem.
        \item We conduct an exploration of the fairness vs. accuracy tradeoff for regression and highlight an example in which our minimax algorithms provide a substantial Pareto improvement over the equality of error rates notion.
        \item Next, we give an account of the difficulties encountered when our oracle assumption fails in the classification case (because there are no efficient algorithms for minimizing 0/1 classification error, and so we must rely on learning heuristics).
        \item With this in mind, we again explore tradeoff curves for the classification case and finish with another comparison in which we show marked improvement over equality of error rates.
    \end{itemize}
\end{enumerate}

\subsection{Related Work}
There are similar algorithms known for optimizing minimax error in other contexts, such as scheduling, fair division, and clustering --- see e.g. \cite{scheduling, fairdivision, samadi2018price, chen, cotter}). Particularly related are \cite{chen} and \cite{cotter}; \cite{chen} employs a method similar to our first algorithm to minimize a set of non-convex losses. In applying similar techniques to a fair classification domain, our conceptual contributions include the ability to handle overlapping groups.

Our work builds upon the notion of minimax fairness proposed by Martinez et al. \cite{Martinez} in the context of classification. Their algorithm -- Approximate Projection onto Start Sets (APStar) -- is also an iterative process which alternates between finding a model that minimizes risk and updating group weightings using a variant of gradient descent, but they provide no convergence analysis. More importantly, APStar relies on knowledge of the base distributions of the samples (while our algorithms do not) and it does not provide the capability to relax the minimax constraint and explore an error vs. fairness tradeoff. \cite{Martinez} analyze only the classification setting, but we provide theory and perform experiments in both classification and regression settings. Because our meta-algorithm is easily extensible, we are able to generalize to non-disjoint groups and various error types (with an emphasis on false positive and false negative rates). 

Achieving minimax fairness over a large number of groups has been proposed by \cite{lahoti2020fairness} as a technique for achieving fairness when protected group labels are not available. Our work relates to \cite{lahoti2020fairness} in much the same way as it relates to \cite{Martinez}, in that \cite{lahoti2020fairness} is purely empirical, whereas we provide a formal analysis of our algorithm, as well as a version of the algorithm that allows us to relax the minimax constraint and explore performance tradeoffs. 

Technically, our work uses  a similar approach to  \cite{agarwal2018reductions, agarwal2019fair}, which also reduce a ``fair'' classification problem
to a sequence of unconstrained cost-sensitive classification
problems via the introduction of a zero-sum game. Crucial to this line of work is the connection between no regret learning and equilibrium convergence, originally due to \cite{Freund}.

\section{Framework and Preliminaries}\label{sec:prelim}
We consider pairs of feature vectors and labels ${(x_i,y_i)}_{i=1}^{n}$, where $x_i$ is a feature vector, divided into $K$ groups $\{G_1,\ldots,G_K\}$.  We choose a class $H$ of models (either classification or regression models), mapping features to predicted labels, with loss function $L$ taking values in $[0,1]$, average population error: $$\epsilon(h) = \frac{1}{n} \sum_{i=1}^{n} L(h(x_i),y_i)$$ and average group loss: $$\epsilon_k(h) = \frac{1}{|G_k|}\sum_{(x,y) \in G_k} L(h(x),y)$$ for some $h \in H$. We also admit randomized models in this paper, which can be viewed as belonging to the set $\Delta H$ of probability distributions over $H$. 
We define population loss and group loss for a distribution over models as the expected loss over a random choice of model from the distribution.   

First, in the pure minimax problem, our goal is to find a randomized model $h^*$ that minimizes the maximum error rate over all groups: 
\begin{align}
\label{eqn:minimax}
 h^* = \text{argmin}_{h \in \Delta H} \left\{\max_{1 \le k \le K} \epsilon_k(h)\right\}
\end{align} 
We let $\mathrm{OPT}_1$ denote the value of the solution to the minimax problem: $\mathrm{OPT}_1 = \max_k \epsilon_k(h^*).$ We say that a randomized model $h$ is $\epsilon$-approximately optimal for the minimax objective if:
\[\max_k \epsilon_k(h) \leq \mathrm{OPT}_1+\epsilon\]

We next describe an extension of the minimax problem:  Given a target maximum group error bound $\gamma \geq \mathrm{OPT}_1$, the goal is to find a randomized model that minimizes overall population error while staying below the specified maximum group error threshold:

\begin{mini}
    {h \in \Delta H}{\epsilon(h)}{}{}
    \addConstraint{\epsilon_k(h)}{\leq \gamma,\:}{ k=1,\ldots,K}
\label{eqn:constrained}
\end{mini}

This extension has two desirable properties:
\begin{enumerate}
    \item It has an objective function: there may in principle be many minimax optimal models that have different overall error rates. The constrained optimization problem defined above breaks ties so as to select the model with lowest overall error. 
    \item The constrained optimization problem allows us to trade off our maximum error bound $\gamma$ with our overall error, rather than requiring us to find an exactly minimax optimal model. In some cases, this tradeoff may be worthwhile in that small increases in $\gamma$ can lead to large decreases in overall error. 
\end{enumerate}

Given a maximum error bound $\gamma$, we write $\mathrm{OPT}_2$ for the optimum value of Problem~(\ref{eqn:constrained}). We say that a randomized model $h$ is an $\epsilon$-approximate solution to the constrained optimization problem in (\ref{eqn:constrained}) if $\epsilon(h) \leq \mathrm{OPT}_2 + \epsilon$, \emph{and} for all $k$, $\epsilon_k(h) \leq \gamma + \epsilon$.

In order to solve Problems~(\ref{eqn:minimax}) and ~(\ref{eqn:constrained}), we  pose the problems as two player games in Section~\ref{sec:game}. This will rely on several classical concepts from game theory, which we will expand upon in Section~\ref{sec:theory}. We also define a weighted empirical risk minimization oracle over the class $H$, which we will use as an efficient non-fair subroutine in our algorithms.

\begin{definition}[Weighted Empirical Risk Minimization Oracle]
\label{def:werm}
A weighted empirical risk minimization oracle for a class $H$ (abbreviated \werm) takes as input a set of $n$ tuples $(x_i,y_i)_{i=1}^{n}$, a weighting of points $w$, and a loss function $L$, and finds a hypothesis $\hat{h} \in H$ that minimizes the weighted loss, i.e., $\hat{h} \in \text{argmin}_{h \in H} \sum_{i=1}^{n}w_i L(h(x_i),y_i)$.
\end{definition}

\section{Two Player Game Formulation}
\label{sec:game}
Starting with Problem~(\ref{eqn:minimax}), we recast the optimization problem as a zero-sum game between a learner and a regulator in \mmFair. At each round $t$, there is a weighting over groups determined by the regulator. The learner (best) responds by computing model $h_t$ to minimize the weighted prediction error. The regulator updates group weights using the well-known Exponential Weights algorithm 
with respect to group errors achieved by $h_t$ \cite{cesa-bianchi_lugosi_2006}. The learner's final model $M$ is the randomized model with uniform weights over the $h_t$’s produced. In the limit, $M$ converges to a minimax solution with respect to group error. In particular, over $T$ rounds, with $\eta_t \approx 1 /\sqrt{t}$, the empirical average of play forms a $1 / \sqrt{T}$-approximate Nash equilibrium \cite{Freund}.

\begin{algorithm}
\SetAlgoLined
\KwIn{ $\{x_i,y_i\}_{i=1}^{n}$, adaptive learning rate $\eta_t$, populations $G_k$ with relative sizes $p_k=\frac{|G_k|}{n}$, iteration count $T$, loss function $L$, model class $H$\;
 }
Let $\epsilon_k(h) := \frac{1}{|G_k|}\sum_{(x,y) \in G_k} L(h(x),y)$\;
Initialize $\lambda_k := p_k\; \forall k$\;
 \For{$t=1$ \textbf{to} $T$}{
  Find $h_t := \text{argmin}_{h \in H} \sum_k \lambda_k \cdot \epsilon_k(h)$\;
  Update each $\lambda_k := \lambda_k \cdot exp(\eta_t \cdot \epsilon_k(h_t))$\;
 }
\KwOut{Uniform distribution over the set of models ${h_1,...,h_T}$}
\caption{\mmFair}
\label{alg:game}
\end{algorithm}

As discussed in Section~\ref{sec:prelim}, not all minimax solutions achieve the same overall error. By setting an acceptable maximal group error $\gamma$, we can potentially lower overall error by solving the relaxed version in Problem~(\ref{eqn:constrained}). 

Letting $p_k=\frac{|G_k|}{n}$ be group proportions, and assuming that the groups are disjoint here for simplicity,\footnote{The derivation for overlapping groups is given in the Appendix.} the Lagrangian dual function of Problem~(\ref{eqn:constrained}) is given by:

\begin{align*}
F(\lambda,h) &= \epsilon(h) + \sum_k \lambda_k(\epsilon_k(h) - \gamma) \\
&= \sum_k \left( (p_k + \lambda_k) \epsilon_k(h) - \lambda_k \gamma \right)
\end{align*}

We again cast this problem as a game in \mmFairCon\ where the learner chooses $h$ to minimize $\sum_k (p_k + \lambda_k) \epsilon_k(h)$, and the regulator adjusts $\mathbb{\lambda}$ through gradient ascent with gradient $\frac{\delta F}{\delta \lambda_k} = \epsilon_k(h) - \gamma$.
As before, the empirical average of play converges to a Nash equilibrium, where an equilibrium corresponds to an optimal solution to the original constrained optimization problem.
    
\begin{algorithm}
\SetAlgoLined
\KwIn{ $\{x_i,y_i\}_{i=1}^{n}$, adaptive learning rate $\eta_t$, populations $G_k$ with relative sizes $p_k=\frac{|G_k|}{n}$, iteration count $T$, loss function $L$, model class $H$, maximal group error $\gamma$\;
 }
Let $\epsilon_k(h) := \frac{1}{|G_k|}\sum_{(x,y) \in G_k} L(h(x),y)$\;
Initialize $\lambda_k := 0\; \forall k$ \;
 \For{$t=1$ \textbf{to} $T$}{
  Find $h_t := \text{argmin}_{h \in H} \sum_k (p_k + \lambda_k) \cdot \epsilon_k(h)$\;
  Update each $\lambda_k := \max{(\lambda_k + \eta_t \cdot (\epsilon_k(h_t) - \gamma), 0)}$\;
 }
\KwOut{Uniform distribution over the set of models ${h_1,...,h_T}$}
\caption{\mmFairCon}
\label{alg:constrained}
\end{algorithm} 

\section{Theoretical Guarantees}
\label{sec:theory}
In this section we state the theoretical guarantees of our algorithms. To do so, we make an assumption, now standard in the fair machine learning literature, which allows us to bound the \emph{additional} hardness of our fairness desiderata, on top of nonfair learning:

\begin{assumption}
\label{oracle assumption}
(Oracle Efficiency)
We assume the learner has access to a weighted empirical risk minimization oracle over the class $H$, \werm, as specified in Definition~\ref{def:werm}.
\end{assumption}

This assumption will be realized in practice whenever the objective is convex (for example, least squares linear regression). When the objective is not convex (for example, 0/1 classification error) we will employ heuristics in our experiments which are not in fact oracles, resulting in a gap between theory and practice that we investigate empirically. 

\subsection{\mmFair}

\begin{theorem}
After $T = \frac{\ln K}{2 \epsilon^2}$ many rounds, \mmFair\ returns a randomized hypothesis that is an $\epsilon$-optimal solution to Problem~(\ref{eqn:minimax}).
\end{theorem}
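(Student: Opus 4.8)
The plan is to instantiate the Freund--Schapire reduction from no-regret dynamics to approximate equilibria in the zero-sum game that \mmFair\ implements. Define the payoff $U(\lambda,h) = \sum_k \lambda_k \epsilon_k(h)$ on the domain $\lambda \in \Delta_K$ (the regulator, maximizing, where $\Delta_K$ is the probability simplex over the $K$ groups) and $h \in \Delta H$ (the learner, minimizing). The first observation is that for any fixed $h$ the maximizing regulator strategy puts all its mass on the worst group, so $\max_{\lambda \in \Delta_K} U(\lambda,h) = \max_k \epsilon_k(h)$; in particular the game value equals $\min_{h \in \Delta H} \max_k \epsilon_k(h) = \mathrm{OPT}_1$. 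Hence it suffices to show that the output $M = \frac{1}{T}\sum_{t=1}^T h_t$ satisfies $\max_k \epsilon_k(M) \le \mathrm{OPT}_1 + \epsilon$.

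First I would control the learner's side using the oracle. Because $h_t$ is a \werm\ best response to the current weights, $U(\lambda_t,h_t) = \min_{h \in H} \sum_k \lambda_{t,k} \epsilon_k(h) \le \sum_k \lambda_{t,k} \epsilon_k(h^*) \le \max_k \epsilon_k(h^*) = \mathrm{OPT}_1$, so the learner's payoff is at most $\mathrm{OPT}_1$ every round. This is the easy ``weak duality'' direction and, importantly, does not require the full minimax theorem, nor does it require the learner to run a no-regret algorithm: only the regulator needs the no-regret property.

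Next I would invoke the no-regret guarantee of Exponential Weights for the regulator. Since the per-round gain vector $(\epsilon_1(h_t),\ldots,\epsilon_K(h_t))$ lies in $[0,1]^K$, the standard Hedge analysis with a tuned learning rate $\eta = \sqrt{8\ln K / T}$ gives, for every fixed $\lambda^* \in \Delta_K$,
\[
\frac{1}{T}\sum_{t=1}^T U(\lambda^*,h_t) \;\le\; \frac{1}{T}\sum_{t=1}^T U(\lambda_t,h_t) + \sqrt{\frac{\ln K}{2T}}.
\]
Combining with the best-response bound of the previous paragraph, and using that $\epsilon_k(M) = \frac{1}{T}\sum_t \epsilon_k(h_t)$ by linearity of group loss over the uniform mixture, linearity of $U$ in $h$ yields $\max_k \epsilon_k(M) = \max_{\lambda^* \in \Delta_K} \frac{1}{T}\sum_t U(\lambda^*,h_t) \le \mathrm{OPT}_1 + \sqrt{\ln K/(2T)}$. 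Setting the regret term equal to $\epsilon$ gives exactly $T = \ln K/(2\epsilon^2)$, as claimed.

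I expect the main obstacle to be bookkeeping the exact constant and reconciling it with the stated learning rate: the clean $\sqrt{\ln K/(2T)}$ average regret (hence the precise $T = \ln K/(2\epsilon^2)$) comes from optimizing the Hedge bound $\frac{\ln K}{\eta} + \frac{\eta T}{8}$ at a fixed horizon-dependent $\eta$, whereas the algorithm is phrased with an adaptive $\eta_t \approx 1/\sqrt{t}$. I would either fix $\eta$ as a function of the horizon $T$ or cite an anytime/doubling version of Exponential Weights that achieves the same rate up to constants. A secondary point worth stating explicitly is that this is an in-sample (empirical) guarantee over the given $n$ points; the passage from empirical group losses to population minimax loss is a separate generalization argument and is not part of this claim.
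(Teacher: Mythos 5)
Your proposal is correct and follows essentially the same route as the paper: the regulator's Exponential Weights regret bound $\sqrt{\ln K/(2T)}$ combined with the learner's \werm\ best responses, via the Freund--Schapire no-regret-to-equilibrium reduction, with $T = \ln K/(2\epsilon^2)$ falling out of setting the average regret to $\epsilon$. The only difference is presentational --- the paper cites the Freund--Schapire theorem as a black box, whereas you inline its one-sided argument (best response gives $U(\lambda_t,h_t) \le \mathrm{OPT}_1$, regret gives $\max_k \epsilon_k(M) \le \frac{1}{T}\sum_t U(\lambda_t,h_t) + \epsilon$), which makes the conclusion land directly on the paper's definition of $\epsilon$-optimality and honestly flags the fixed-versus-adaptive learning rate mismatch that the paper glosses over.
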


\begin{proof}
From \cite{cesa-bianchi1999, cesa-bianchi_lugosi_2006,10.1006/jcss.1997.1504}, under the conditions of Assumption 1, with loss function $L(\cdot)$, $K$ groups, $T$ time steps, and step size $\eta = \frac{8 \ln K}{T}$ the Exponential Weights update rule of the regulator yields regret:
\begin{align}
\frac{R_T}{T} &= \frac{1}{T} \left(\sum_{t=1}^{T}L(h_t(x),y) - \min_{h_i \in H} \sum_{i=1}^{T} L(h_i(x),y)\right) \\
&\leq \sqrt{\frac{\ln K}{2T}}
\end{align}

Plugging in $T = \frac{\ln K}{2 \epsilon^2}$ gives
$\frac{R_T}{T} \leq \sqrt{\frac{\ln K}{2\frac{\ln K}{2 \epsilon^2} }} = \epsilon$. 

As the learner plays a \textit{best-response} strategy by calling \werm, applying the following result of \cite{Freund} completes the proof:

\begin{theorem}[Freund and Schapire, 1996]
\label{thm:freund}
Let $h_1,...h_T$ be the learner's sequence of models and $\lambda_1,...,\lambda_T$ be the regulator's sequence of weights. Let $\bar{h} = \frac{1}{T} \sum_{i=1}^{T} h_t$ and $\bar{\lambda} = \frac{1}{T} \sum_{i=1}^{T} \lambda_i$. Then, if the regret of the regulator satisfies $\frac{R_G(T)}{T} \leq \epsilon$
and the learner best responds in each round, ($\bar{h},\bar{\lambda}$) is an $\epsilon$-approximate solution.
\end{theorem}

Therefore, the uniform distribution over $h_1,...,h_T$ obtained by the learner in \mmFair\ is an $\epsilon$-optimal solution to Problem~(\ref{eqn:minimax}), as desired. We note that this requires one call to \werm\ for each of the $T$ rounds.

\end{proof}

\subsection{\mmFairCon}

\begin{theorem}
\label{thm:alg2}
After $T=\frac{1}{4\epsilon^2}\left(\frac{1}{\epsilon^2} + 2K\right)^2$ many rounds, \mmFairCon\ returns a randomized hypothesis that is an $\epsilon$-optimal solution to Problem~(\ref{eqn:constrained}).
\end{theorem}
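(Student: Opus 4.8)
The plan is to mirror the structure of the proof for \mmFair, but to replace the Exponential Weights regret bound by an online (projected) gradient ascent bound, since in \mmFairCon\ the regulator's variable $\lambda$ ranges over the nonnegative orthant rather than the simplex. First I would record that, because $\gamma \ge \mathrm{OPT}_1$, Problem~(\ref{eqn:constrained}) is feasible (the minimax model witnesses $\epsilon_k \le \mathrm{OPT}_1 \le \gamma$ for all $k$), and that $F(\lambda,h)$ is affine in $\lambda$ and linear in the mixture over $h \in \Delta H$. Hence the Lagrangian game $\min_{h\in\Delta H}\max_{\lambda\ge 0} F(\lambda,h)$ has value exactly $\mathrm{OPT}_2$ by strong duality, and an approximate equilibrium yields an approximate solution of the constrained problem.

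Next I would analyze the regulator's update $\lambda_k \leftarrow \max(\lambda_k + \eta_t(\epsilon_k(h_t)-\gamma),0)$ as projected gradient ascent on the concave-in-$\lambda$ map $F(\cdot, h_t)$, whose gradient coordinate is $\epsilon_k(h_t)-\gamma$. The crucial point is that the feasible set is unbounded, so rather than a diameter-based regret bound I would use the comparator form: for every fixed $u \ge 0$, $\frac{1}{T}\sum_{t}\big(F(u,h_t)-F(\lambda_t,h_t)\big) \le \frac{\|u\|_2^2}{2\eta T} + \frac{\eta}{2}\max_t\|\nabla_\lambda F\|_2^2$, where the squared gradient norm is at most $2K$ (each coordinate $\epsilon_k-\gamma$ being bounded). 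Combining this with the learner's best response and averaging $\bar h = \frac1T\sum_t h_t$, $\bar\lambda = \frac1T\sum_t\lambda_t$ (the Freund--Schapire argument of Theorem~\ref{thm:freund}, using linearity of $F$ in each argument and the feasibility of the primal optimum to drop the $\bar\lambda$-term), I would obtain the master inequality $F(u,\bar h) \le \mathrm{OPT}_2 + \rho(u)$ for all $u\ge 0$, where $\rho(u)$ denotes the comparator-$u$ regret.

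I would then extract the two required guarantees from this master inequality. Taking $u = 0$ gives $\epsilon(\bar h) \le \mathrm{OPT}_2 + \rho(0)$, i.e.\ approximate objective optimality. Taking $u = B e_j$ with $j$ the index of the most-violated constraint gives $\epsilon_j(\bar h) - \gamma \le (\mathrm{OPT}_2 + \rho(Be_j))/B$, i.e.\ approximate feasibility, provided $B$ is chosen large enough (on the order of $1/\epsilon$, since $\mathrm{OPT}_2 \le 1$) to force the violation below $\epsilon$. The comparator norm $\|u\|_2 = B \approx 1/\epsilon$ is exactly what produces the $\frac{1}{\epsilon^2}$ term inside the regret bound; together with the gradient bound $2K$ and the step size $\eta \approx 1/\sqrt{T}$, setting the average regret $\frac{\|u\|_2^2 + 2K}{2\sqrt T} = \frac{1/\epsilon^2 + 2K}{2\sqrt T}$ equal to $\epsilon$ and solving for $T$ yields $T = \frac{1}{4\epsilon^2}\big(\frac{1}{\epsilon^2}+2K\big)^2$.

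The main obstacle I anticipate is precisely this coupling in the last step: the dual comparator must be taken large ($B \sim 1/\epsilon$) to enforce feasibility, but enlarging $B$ inflates the $\|u\|_2^2/(2\eta T)$ regret term, so one must choose $B$ and $\eta$ so that objective suboptimality and constraint violation are \emph{simultaneously} at most $\epsilon$. Handling the unbounded dual domain cleanly---justifying the comparator-based regret bound in place of a diameter bound, and verifying that the single-coordinate comparator $Be_j$ (rather than a full box of $\ell_2$-diameter $B\sqrt K$) is what controls feasibility---is what keeps the $\frac{1}{\epsilon^2}$ term free of an extra factor of $K$ and is the technically delicate heart of the argument.
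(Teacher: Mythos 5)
Your proposal is correct and follows essentially the same route as the paper's own proof: an online gradient ascent regret bound for the regulator with effective dual radius $1/\epsilon$ and gradient norm bound of order $\sqrt{K}$, combined with the learner's best responses via the Freund--Schapire equilibrium argument to get objective optimality, and a single-coordinate dual weight of magnitude $1/\epsilon$ on a violated constraint to force approximate feasibility. Your comparator-based formulation of the regret (rather than the paper's invocation of Zinkevich's diameter bound over a restricted ball) and the explicit master inequality $F(u,\bar h) \le \mathrm{OPT}_2 + \rho(u)$ are, if anything, a cleaner rendering of the same argument, since they match the algorithm's actual projection onto the nonnegative orthant.
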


\begin{proof}

In \mmFairCon\ the regulator plays a different strategy: Online Gradient Descent. We specify the regret bound from \cite{zinkevich} below. Note that in the following definition, $F$ is the set containing all values of $\lambda$ (the vector updated through the gradient descent procedure), and the size of the set $F$ is denoted as:
$$||F|| = \max_{\lambda_1,\lambda_2 \in F} d(\lambda_1,\lambda_2) = \max_{\lambda_1,\lambda_2 \in F} ||\lambda_1-\lambda_2||$$

In our analysis, we compute our regret compared to the best vector of  weights such that $||\lambda|| \leq \frac{1}{\epsilon} = ||F||$. We write $||\nabla c||$ for the norm of the gradients that we feed to gradient descent. As our losses are bounded by $[0,1]$, $||\nabla{c}||^2 = \sum_{k=1}^K (\epsilon_k(h_t) - \gamma)^2 \leq K$. Then, from \cite{zinkevich}, with $\eta_t=t^{-\frac{1}{2}}$, 
$$R_T \leq \frac{||F||^2 \sqrt{T}}{2}+\left( \sqrt{T} - \frac{1}{2}\right)|| \nabla c||^2$$

Plugging in the specification for our problem, we have
\begin{align*}
\frac{R_T}{T} &\leq \frac{1}{T}\left(\frac{\frac{1}{\epsilon^2} \sqrt{T}}{2}+K\left( \sqrt{T} - \frac{1}{2}\right)\right)
< \frac{\frac{1}{\epsilon^2} + 2K}{2\sqrt{T}}
\end{align*}

Substituting $T=\frac{1}{4\epsilon^2}\left(\frac{1}{\epsilon^2} + 2K\right)^2$ yields

$$\frac{R_T}{T} \leq \frac{\frac{1}{\epsilon^2} + 2K}{2\sqrt{\frac{1}{4\epsilon^2}\left(\frac{1}{\epsilon^2} + 2K\right)^2}}=\epsilon$$

Therefore, Theorem~\ref{thm:freund} guarantees that the value of the objective is not more than $\epsilon$ away from $OPT_2$, using the notation of Section~\ref{sec:prelim}.

Finally, we show that the learner cannot choose a model that violates a constraint by more than $\epsilon$. Suppose the learner chose a randomized strategy $\tilde{h}$ such that $\epsilon_k(\tilde{h}) > \gamma + \epsilon$ for some $k$. Then, the regulator may set $\lambda_k=\frac{1}{\epsilon}$ and $\lambda_j=0$ for all $j \neq k$, yielding 
$$\max_\lambda F(\lambda,\tilde{h}) > \epsilon(\tilde{h}) + \epsilon \cdot \frac{1}{\epsilon}= \epsilon(\tilde{h}) + 1$$

However, because $\epsilon(\tilde{h}) \leq 1$ by Assumption 1, the learner is better off selecting $h'$ such that $\epsilon(h') = \gamma$ for all groups, even if $\epsilon(h')=1$. Then, $\max_\lambda F(\lambda,h') = \epsilon(h') \leq 1$.

Thus, an $\epsilon$-approximate equilibrium distribution $h$ for the  learner must satisfy $\epsilon_k(h) \leq \gamma + \epsilon$ to minimize the value of $\max_\lambda F(\lambda,h)$. We have shown that in $T=\frac{1}{4\epsilon^2}\left(\frac{1}{\epsilon^2} + 2K\right)^2$ rounds -- with one call to \werm\ per round -- \mmFairCon\ always outputs a model $h$ such that $\epsilon(h) \leq OPT_2 + \epsilon$ and for all $k$, $\epsilon_k(h) \leq \gamma + \epsilon$, or an $\epsilon$-optimal solution to Problem~(\ref{eqn:constrained}).

\end{proof}
\subsection{Generalization}
\label{subsec:generalization}
Finally, we analyze the generalization properties of \mmFair\ and \mmFairCon. We observe that if we have uniform convergence over group errors for each deterministic model in $H$, then we also achieve uniform convergence over each group with randomized models in $H$. Because this is a uniform convergence statement, it does not interact with the weightings of our algorithms. The proof of the following theorem can be found in the Appendix.

\begin{theorem}
\label{thm:generalization}

Fix $\delta>0$, let $d$ be the VC dimension of the class $H$, and let $n_1,\ldots,n_K$ be the sample sizes of groups $i=1,\ldots,K$ in sample $S$ drawn from distribution $D$.
Recall that $\epsilon_i(h)$ denotes the error rate of $h$ on the $n_i$ samples of group $i$ in S, and let
$\epsilon'_i(h)$ denote the expected error of $h$ with respect to $D$ conditioned on group $i$.

Then for every $h \in H$, and every group $i$, with probability at least $1-\delta$ over the randomness of $S$:
$$|{\epsilon_i}(h) - \epsilon'_i(h) | \leq O\left( \sqrt{\frac{\log{\frac{1}{\delta}} + d \log n_i}{n_i}}\right)$$
\end{theorem}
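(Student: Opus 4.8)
The plan is to reduce the statement to a standard VC-dimension uniform-convergence bound applied \emph{one group at a time}. The point flagged just before the theorem is that the randomized models and the algorithmic weightings never enter: the entire statement is driven by uniform convergence over the \emph{deterministic} class $H$, so the specific weights produced by \mmFair\ and \mmFairCon\ are irrelevant. Accordingly I would first fix a single group $i$ and condition on the realized group sizes $n_1,\ldots,n_K$. Conditioned on these sizes, the $n_i$ points of $S$ that fall in group $i$ are i.i.d.\ draws from the group-conditional distribution $D_i := D \mid G_i$, and by definition $\epsilon_i(h)$ is exactly the empirical error of $h$ on these $n_i$ samples while $\epsilon'_i(h) = \mathbb{E}_{D_i}[L(h(x),y)]$ is its population counterpart.

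The main step is then an off-the-shelf application of the Vapnik--Chervonenkis theorem to $H$ under $D_i$. Since $H$ has VC dimension $d$, Sauer's lemma bounds the growth function by $\Pi_H(m) \le (em/d)^d$, and the classical two-sided uniform-convergence inequality gives
\[
\Pr\!\left[\, \sup_{h \in H} \left| \epsilon_i(h) - \epsilon'_i(h) \right| > \alpha \,\right] \;\le\; 4\,\Pi_H(2 n_i)\, e^{-n_i \alpha^2 / 8}.
\]
Setting the right-hand side equal to $\delta$ and solving for $\alpha$ — i.e.\ taking $\tfrac{n_i \alpha^2}{8} = \log 4 + d\log\tfrac{2 e n_i}{d} + \log\tfrac{1}{\delta}$ — yields $\alpha = O\!\big(\sqrt{(\log\tfrac{1}{\delta} + d\log n_i)/n_i}\big)$, which is the claimed bound for group $i$. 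This establishes the inequality simultaneously for every deterministic $h \in H$ on the stated $1-\delta$ event.

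Finally I would record why randomized models come for free, making good on the remark in the text. For any $\tilde h \in \Delta H$ we have $\epsilon_i(\tilde h) = \mathbb{E}_{h \sim \tilde h}[\epsilon_i(h)]$ and likewise $\epsilon'_i(\tilde h) = \mathbb{E}_{h \sim \tilde h}[\epsilon'_i(h)]$, so by the triangle inequality and Jensen, $|\epsilon_i(\tilde h) - \epsilon'_i(\tilde h)| \le \mathbb{E}_{h \sim \tilde h}|\epsilon_i(h) - \epsilon'_i(h)| \le \sup_{h \in H}|\epsilon_i(h) - \epsilon'_i(h)|$, which is controlled by the \emph{same} event and the \emph{same} bound. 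Hence uniform convergence over deterministic $H$ upgrades immediately to uniform convergence over $\Delta H$, with no interaction with the algorithms' weightings.

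The calculation is essentially textbook, so the only real subtleties are bookkeeping rather than a single hard step. The first is the conditioning on the random group sizes: the within-group points are i.i.d.\ from $D_i$ only after we condition on $n_i$, so I would be careful to state the guarantee per group as a function of the realized $n_i$ (the bound depends on $n_i$, not on $n$). The second is the quantifier over groups: the displayed bound carries no $\log K$ term, so as written it is a \emph{per-group} statement; to control all $K$ groups simultaneously one applies a union bound, replacing $\delta$ by $\delta/K$ and thereby adding a $\log K$ term that is absorbed into the $O(\cdot)$. I would also note that VC dimension is the right complexity measure precisely because this analysis targets the classification ($0/1$-loss) setting; for general bounded losses the identical argument goes through with the pseudo-dimension (or fat-shattering dimension) in place of $d$.
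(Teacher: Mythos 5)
Your proposal is correct and follows essentially the same route as the paper's proof: fix a group $i$, invoke the standard VC uniform-convergence bound for the deterministic class $H$ on the $n_i$ group samples, and then lift the guarantee to randomized models by linearity of expectation. If anything, your write-up is slightly more careful than the paper's, which asserts $|\epsilon_i(\tilde h) - \epsilon'_i(\tilde h)| = \sum_j \alpha_j |\epsilon_i(h_j) - \epsilon'_i(h_j)|$ as an equality where your triangle-inequality/Jensen step (an inequality) is what is actually warranted, and your remarks on conditioning on realized group sizes and on the $\log K$ union bound make explicit what the paper only states informally after the theorem.
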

Note that this bounds the generalization gap \emph{per group}. An immediate consequence is that, with high probability, the gap between the in and out-of-sample minimax error can be bounded by $O\left(\max_i  \sqrt{\frac{\log{\frac{K}{\delta}} + d \log n_i}{n_i}}\right)$, a quantity that is dominated by the sample size of the \emph{smallest} group. 

\section{Extension to False Positive/Negative Rates}
\label{sec:falsepos}
A  strength of our framework is its generality. With minor alterations, \mmFairCon\ can also be used to bound false negative or false positive group error in classification settings while minimizing overall population error.\footnote{Note that a minimax false positive (negative) rate of zero can be achieved by always predicting negative (positive). Therefore, it is trivial to solve the exact minimax problem for false positive or false negative rates, but the relaxed problem is nontrivial.}

To extend \mmFairCon\ to bound false positive rates, we again consider a setting in which we are given groups $\{G_k\}_{k=1}^{K}$ containing points ${(x_i,y_i)}_{i=1}^{n}$. We will want to bound the error, here denoted by $\epsilon(h,(x_i,y_i)) = |h(x_i)-y_i|$, on the parts of each group that contain true negatives: $G_k^0 = \{(x, y) : (x, y) \in G_k \text{ and } y = 0\}$. We want to minimize overall population error while keeping all group false positive rates below $\gamma$, and our adapted constrained optimization problem is:
\begin{mini}
    {h \in H}{\frac{1}{n}\sum_{i=1}^{n} \epsilon(h,(x_i,y_i))}{}{}
    \addConstraint{ \frac{1}{|G_k^0|} \sum_{(x_i,y_i) \in G_k^0} \epsilon(h, (x_i,y_i))}{\leq \gamma,\:}{ k=1,\ldots,K}
\label{eqn:falsePos}
\end{mini}

The Lagrangian dual for Problem~(\ref{eqn:falsePos}) is:
\begin{align*}
    &F(\lambda,h) = \frac{1}{n}\sum_i \epsilon(h,(x_i,y_i)) \\
    &+ \sum_k \lambda_k \left( \frac{1}{|G_k^0|} \sum_{(x_i,y_i) \in G_k^0} \epsilon(h, (x_i,y_i)) - \gamma \right) \\
    &= \sum_i \epsilon(h,(x_i,y_i)) \left( \frac{1}{n} + \sum_k \frac{\lambda_k}{|G_k^0|}\mathbb{I}\{(x_i,y_i) \in G_k^0\} \right) - \sum_k \lambda_k \gamma \\
    &= \sum_i w_i \epsilon(h,(x_i,y_i)) - \sum_k \lambda_k \gamma 
\end{align*}
where $w_i = \frac{1}{n} + \sum_k \frac{\lambda_k}{|G_k^0|}\mathbb{I}\{(x_i,y_i) \in G_k^0\}$ and $$\frac{\delta F}{\delta \lambda_k} = \frac{1}{|G_k^0|} \sum_i \mathbb{I}\{(x_i,y_i) \in G_k^0\} \epsilon(h, (x_i,y_i)) - \gamma$$

We can then use the sample weights $w$ in the learner's step of the constrained optimization problem. In particular, at each round $t$, the learner will find $h_t := \text{argmin}_{h \in H} \sum_i w_i \cdot \epsilon(h,(x_i,y_i))$
and the regulator will update the $\lambda$ vector with

$$\lambda_k := \left(\lambda_k + \eta_t \cdot \left(\frac{1}{|G_k^0|} \sum_i \mathbb{I}\{(x_i,y_i) \in G_k^0\} \epsilon(h, (x_i,y_i)) - \gamma\right)  \right)_+$$

\section{Experimental Results}
\label{sec:experiments}
We experiment with our algorithms in both regression and classification domains using real data sets. A brief summary of our findings and contributions:

\begin{itemize}
    \item Our minimax algorithm often admits solutions with significant Pareto improvements over equality of errors in practice.
    \item Unlike similar equal error algorithms, our minimax and relaxed algorithms use only non-negative sample weights, increasing performance for both regression and classification via access to better base classifiers, which can optimize arbitrary convex surrogate loss functions (because non-negative weights preserve the convexity of convex loss functions, whereas negative weights do not).

    \item We illustrate the tradeoff between overall error and fairness by explicitly tracing the Pareto curve of possible models between the population error minimizing model and the model achieving (exact) minimax group fairness. We extend this to the case of false positive or false negative group fairness by using our relaxed algorithm over a range of $\gamma$ values.
    
    \item We demonstrate strong generalization performance of the models produced by our algorithms on datasets with sufficiently sized groups 
    
\end{itemize}

\subsection{Methodology and Data}
\label{ssec:methodologydata}
\subsubsection{Data}
\label{data}
We begin with a short description of each of the data sets used in our experiments: 

\begin{itemize}
    \item \textbf{Communities and Crime~\cite{REDMOND2002660, census1, census2, census3, census4}:} Communities within the US. The data combines socio-economic data from the 1990 US Census, law enforcement data from the 1990 US LEMAS survey, and crime data from the 1995 FBI UCR.

    \item \textbf{Bike~\cite{E2020353, doi:10.1080/22797254.2020.1725789, Dua:2019}:} Public bikes rented at each hour in Seoul Bike sharing System with the corresponding weather data and holidays information.

    \item\textbf{COMPAS~\cite{compas}:} Arrest data from Broward County, Florida, originally compiled by ProPublica.

    \item\textbf{Marketing~\cite{Moro, Dua:2019}: } Data related with direct marketing campaigns (phone calls) of a Portuguese banking institution.
    
    \item\textbf{Student~\cite{Cortez2008UsingDM, Dua:2019}: } Student achievement in secondary education of two Portuguese schools.
    
    \item \textbf{Internet Traffic \cite{kdd, Dua:2019}: } Data set of network connection data for training to distinguish between `bad' connections, called intrusions or attacks, and `good' normal connections. We subsampled the larger groups in this dataset to 10\% of their original size for experimental feasibility. 

\end{itemize}
The table below outlines the data sets used in our experiments. For all data sets, categorical features were converted into one-hot encoded vectors and group labels were included in the feature set. 
\begin{table}[ht]
\centering
\scriptsize
\begin{tabular}{|l|c|c|l|l|l|}
\hline
Dataset   & \multicolumn{1}{l|}{$n$} & \multicolumn{1}{l|}{$d$} & Label                                                                       & Group     & Task           \\ \hline
\begin{tabular}[c]{@{}l@{}}Communities\\  and Crime\end{tabular} & 1594 & 133 & \begin{tabular}[c]{@{}l@{}}violent crimes\\  per pop.\end{tabular} & race & regression \\ \hline
Bike     & 8760                     & 19                       & \begin{tabular}[c]{@{}l@{}}rented bikes\\ normalized\end{tabular}           & season    & regression     \\ \hline
COMPAS    & 4904                     & 9                        & \begin{tabular}[c]{@{}l@{}}two year\\ recidivism\end{tabular}               & race, sex & classification \\ \hline
Marketing & 45211                    & 48                       & \begin{tabular}[c]{@{}l@{}}subscribes\\ to term deposit\end{tabular} & job       & classification \\ \hline
Student & 395 & 75 & final grade & sex & classification \\ \hline
Internet Traffic & 50790  & 112  & \begin{tabular}[c]{@{}l@{}}connection\\ legitimacy\end{tabular}& \begin{tabular}[c]{@{}l@{}}protocol\\ type \end{tabular} & classification \\ \hline
\end{tabular}
\label{tab:data}
\end{table}
\subsubsection{Train/Test Methodology}
\label{training-data}
We have two goals in our experiments: to illustrate the \emph{optimization} performance of our algorithms, and to illustrate the \emph{generalization} performance of our algorithms. Our first set of experiments aims to illustrate optimization --- and tradeoffs between overall accuracy and upper bounds on individual group error rates. We perform this set of experiments on the (generally quite small) fairness-relevant datasets, and plot \emph{in-sample} results. 

Our next set of experiments demonstrate the generalization performance of our algorithms on large, real datasets\footnote{We remark that while some of 
these larger datasets, such as the one on Internet traffic, have no obvious fairness considerations, the minimax framework is still well-motivated
 in any setting in which there are distinct ``contexts'' (the groups in a fairness scenario) for which we would like to minimize the worst-performing context.}
for both classification and regression. Here, we 
report our results on a  held-out validation set.  Our findings are consistent with our theoretical generalization guarantees discussed in Section \ref{subsec:generalization} --- that we obtain strong generalization performance when the data is of sufficient size.

\subsubsection{Regression: Finding Exact Solutions Efficiently}
\label{exact-regression}

The solution to a weighted linear regression is \textit{guaranteed} to be the regression function that minimizes the mean squared error across the dataset,  making linear regression a pure demonstration of our theory in Sections~\ref{sec:prelim} and \ref{sec:game}. We note that to solve weighted linear regressions \textit{efficiently}, sample weights must be non-negative, because negative weights make squared error non-convex. Our minimax and relaxed algorithms satisfy this property, giving us access to exact solutions in regression settings. In contrast, similar algorithms (e.g. those of \cite{agarwal2018reductions,Kearns}) for equalizing group error rates require negative sample weights, and cannot use linear regression for exact weighted error minimization, despite the convexity of the unweighted problem. This is another advantage of our approach.

\subsubsection{Classification: Non-convexity of 0/1 Loss}
\label{nonconvexity}

As opposed to mean squared error, 0/1 classification loss is non-convex. As a result, we cannot hope to  efficiently find solutions that exactly minimize classification loss in practice. Instead, we rely on convex surrogate loss functions such as log-loss --- the training objective for logistic regression --- which are designed to approximate classification loss. Note that lack of exact solutions for classification loss violates Assumption \ref{oracle assumption}, so the theoretical guarantees of Sections 2 and 3 may fail to hold. Our algorithm should be viewed as a principled heuristic in these settings. Note, when using logistic regression with \mmFair, we update sample weights based on the log-loss, so our algorithm can be viewed as provably solving an optimization problem with respect to log-loss; but we report errors in our plots with respect to 0/1 loss.\footnote{For additional intuition, in Section~\ref{sec:nonconvex} of the Appendix, we provide a visual comparing the loss surfaces of log-loss and 0/1 loss.}

\subsubsection{Paired Regression Classifier}
\label{prc}

In addition to logistic regression, we experiment with the paired regression classifier (PRC) used in \cite{agarwal2018reductions} and defined below. A paired regression classifier has the important feature that it only needs to solve a convex optimization problem \emph{even in the presence of negative sample weights}. This property is necessary for use with the equality of error rates algorithm from \cite{agarwal2018reductions}, because it generates negative sample weights.  We use this to benchmark our minimax algorithm. 

\begin{definition}[Paired Regression Classifier]
\label{def:PRC}
The paired regression classifier operates as follows: We form two weight vectors, $z^0$ and $z^1$, where $z^k_i$ corresponds to the penalty assigned to sample $i$ in the event that it is labeled $k$. For the correct labeling of $x_i$, the penalty is $0$. For the incorrect labeling, the penalty is the current sample weight of the point, $w_i$. We fit two linear regression models $h^0$ and $h^1$ to predict $z^0$ and $z^1$, respectively, on all samples. Then, given a new point $x$, we calculate $h^0(x)$ and $h^1(x)$ and output $h(x) = \text{argmin}_{k\in\{0,1\}} h^k(x)$.
\end{definition} 

We observe that the paired regression classifier produces a \emph{linear threshold function}, just as logistic regression does.

\subsubsection{Relaxation Methodology}
\label{sssec:relaxation_methodology}

We use \mmFair\ and \mmFairCon\ to explore and plot the tradeoff between minimax fairness and population accuracy as follows:
\begin{itemize}
    \item First we run \mmFair\ to determine the values of two quantities $\gamma_{\text{min}}$, the maximum group error of the randomized minimax model, and $\gamma_{\text{max}}$, the maximum group error of the population error minimizing model. The former is the minimum feasible value for $\gamma$, and the latter is the largest value for $\gamma$ we would ever want to accept.
    \item Next, we run \mmFairCon\ over a selection of gamma values in  $[\gamma_{\text{min}}, \gamma_{\text{max}}]$ to trace the achievable tradeoff between fairness and accuracy and plot the associated Pareto curve.
\end{itemize}

In the false positive/negative case, we skip running \mmFair\ and directly use \mmFairCon\ over a range of $\gamma$ values above 0. This is because a minimax solution for false positive (or negative) group errors will always be zero, as witnessed by a constant classifier that predicts `False' (or `True') on all inputs. 
Thus we set $\gamma_{\text{min}}$ to 0.

Since each of our algorithms produces solutions in the form of randomized models with error reported in expectation, every linear combination of these models represents another randomized model. Further, it means that every point on our Pareto curve for population error vs. maximum group error --- including those falling on the line between two models --- can be achieved by some  randomized model in our class.

\subsection{Linear Regression Experiments}

As explained above, linear regression is an exact setting for demonstrating the properties of our algorithms, as we can solve weighted linear regression problems exactly and efficiently in practice. Our first results, shown in Figure~\ref{fig:simple_regression}, illustrate the behavior of \mmFair\ on the \textit{Communities} dataset. The left and center plots illustrate the errors and weights of the various groups, and the right plot denotes the ``trajectory" of our randomized model over time, showing how we update from our initial population-error minimizing model (labeled by a large pink dot) to our final randomized model achieving minimax group fairness (colored yellow).

Looking at the group error plot, we see that the plurality black communities --- denoted by an orange line --- begin with the largest error of any group with an MSE of ~0.3, while the plurality white communities --- denoted by a blue line --- have the lowest MSE at a value ~0.005. Turning to the group weights plot, we see that our algorithm responds to this disparity by up-weighting samples representing plurality black communities and by down-weighting those representing plurality white communities.  Inspecting the trajectory plot, we note that the algorithm gradually decreases maximum group error at the cost of increased population error, with intermediate models tracing a convex tradeoff curve between these two types of error. After many rounds, an approximate minimax solution for group-fairness is reached, achieving a maximum group error value slightly under 0.02. We observe that our minimax solution nearly equalizes errors on three of the groups, with one group (white communities) having error slightly below the minimax value. 

Near equalization of the highest group errors in minimax solutions as seen in this example is frequent and well-explained. In any error optimal solution, the only way to decrease the error of one group is to increase the error of another. Hence, whenever the loss landscape is continuous over our class of models (as it is in our case, because we allow for distributions over classifiers), minimax optimal solutions will generally equalize the error of at least two groups. But as we see in our examples, it does not require equalizing error across all groups when there are more than two. 

\begin{figure*}[h]
  \centering
\begin{subfigure}{.32\textwidth}
\includegraphics[width=\textwidth]{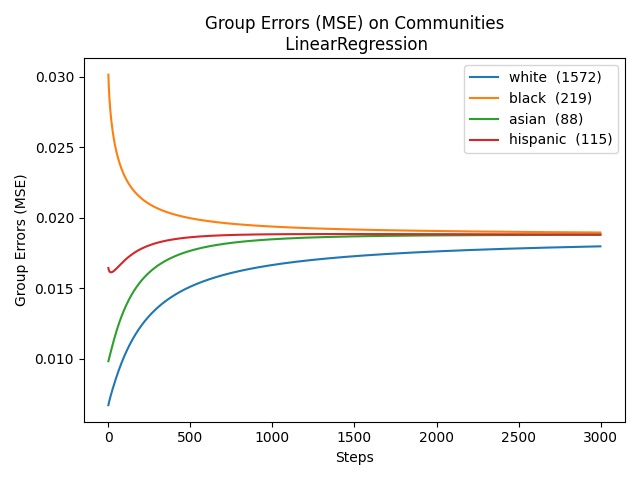}
\end{subfigure}
\begin{subfigure}{.32\textwidth}
\includegraphics[width=\textwidth]{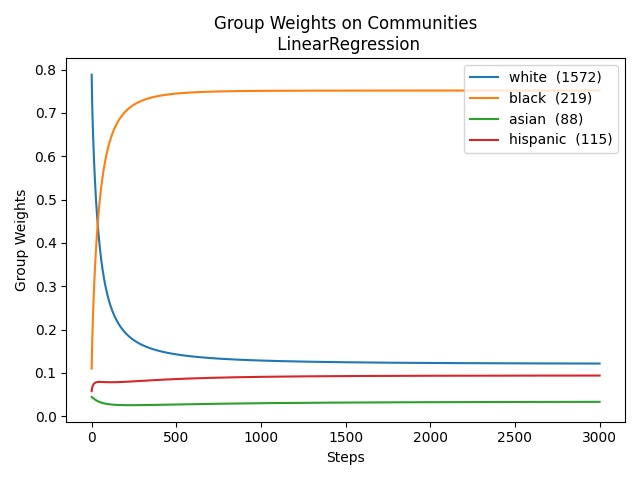}
\end{subfigure}
\begin{subfigure}{.32\textwidth}
\includegraphics[width=\textwidth]{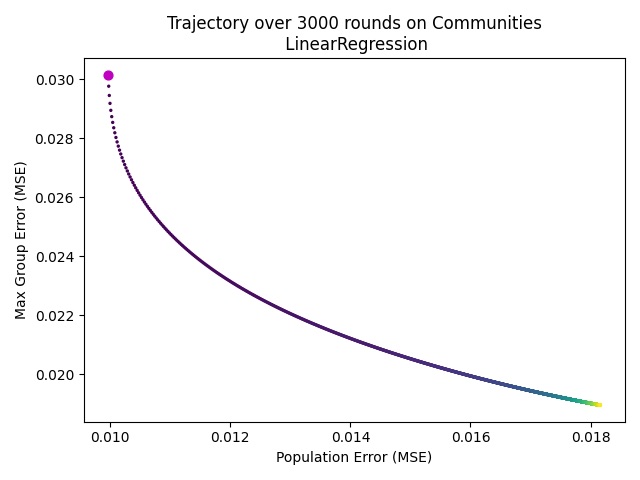}
\end{subfigure}
\caption{\mmFair\ on Communities Dataset}
\label{fig:simple_regression}
\end{figure*}

\subsubsection{Comparing Minimax to Equality}

Next, we provide a comparison between our minimax algorithm and the equal error rates formulation of \cite{agarwal2018reductions}. We note that, while linear regression is an excellent fit for our minimax algorithm, it poses difficulties in the equal error rates framework. In particular, similar primal/dual algorithms for equalizing error rates across groups require the use of negative sample weights, because the dual solution to a linear program with equality constraints generically requires negative variables. Negative sample weights destroy convexity for objective functions like squared error that are convex with non-negative weights. For this reason, we can only use the equal-error algorithm of \cite{agarwal2018reductions} in a meaningful way for linear regression in settings in which the sample weights (by luck) never become negative. On the Bike dataset, we meet this condition, and are therefore able to provide a meaningful comparison between the solutions produced by the two algorithms which is illustrated in Figure~\ref{fig:bike_comp}. We observe that the only difference between the two solutions, is that error in Winter and Autumn increases to the minimax value when we move from minimax to equality. This highlights an important point: enforcing equality of group errors may significantly hinder our performance on members of low-error groups, without providing benefit to those of higher-error groups. Though the bike dataset itself is not naturally fairness-sensitive, the properties illustrated in this example can occur in any  dataset.
\begin{figure*}
\centering
\includegraphics[width=0.48\textwidth]{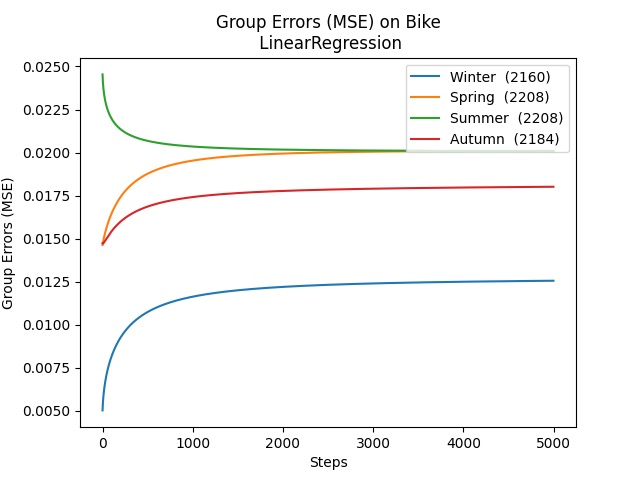}
\includegraphics[width=0.48\textwidth]{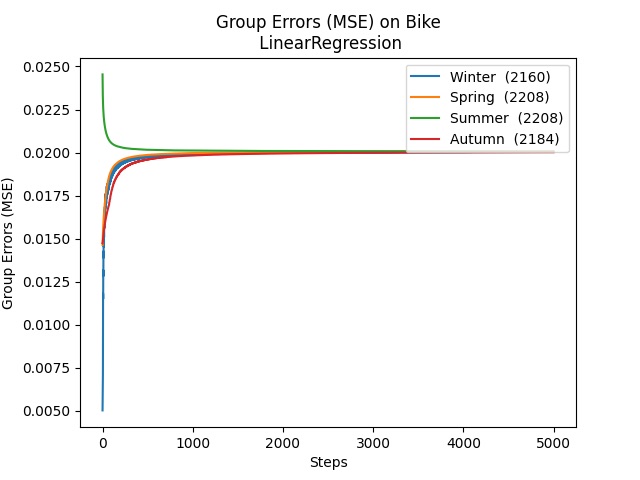}
\captionsetup[subfigure]{labelformat=empty}
\caption{Comparing Minimax (left) and Equal Error (right) Models on Bike Dataset}
\label{fig:bike_comp}
\end{figure*}

\subsubsection{Relaxing Fairness Constraints}

We also investigate the ``cost'' of fairness with respect to overall accuracy by using \mmFair\ and \mmFairCon\ as described in Section~\ref{sssec:relaxation_methodology}. In particular, for each run of \mmFairCon\, we extract the trajectory plot and mark the performance of the final randomized model with a yellow dot labeled with the associated value of gamma. In Figure~\ref{fig:communities_trajectories}, we perform this procedure on the Communities dataset, and overlay these trajectories onto a single plot. We then connect the endpoints of these trajectories to trace a Pareto curve (denoted in red, dashed line) that represents the tradeoff between fairness and accuracy. We observe that the relationship between expected population error and maximum group error is decreasing and convex (both within trajectories and between their endpoints) illustrating a clear tradeoff between the two objectives.  
\begin{figure}
\centering
\includegraphics[width=0.43\textwidth]{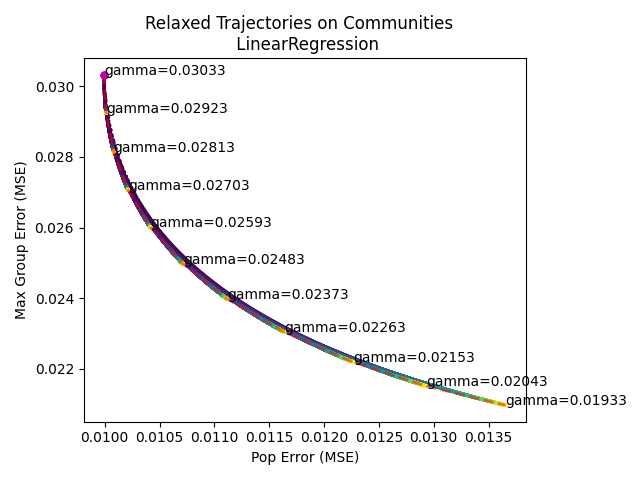}
\captionsetup[subfigure]{labelformat=empty}
\caption{Fairness/Accuracy Tradeoff on Communities}

\label{fig:communities_trajectories}
\end{figure}
\subsection{Classification Experiments}
\subsubsection{Comparing Minimax to Equality}

Our first experiment reveals significant advantages of our algorithm for minimax fairness as compared to the  equal error rates algorithm of \cite{agarwal2018reductions} on the \textit{COMPAS} dataset. When running the equal error algorithm, we are restricted to using the  PRC classifier,  because of its ability to handle negative sample weights. For the minimax algorithm, which utilizes only non-negative sample weights, we are free to use potentially better base algorithms like logistic regression. Nevertheless,  we use only PRC  in this experiment to provide a more direct comparison. 

In Figure \ref{fig:COMPAS} we compare the performance of  \mmFair\ to the equal-error algorithm of \cite{agarwal2018reductions} on the \textit{COMPAS }dataset, when enforcing group fairness for both race and gender groups simultaneously. We observe that our minimax solution strongly Pareto dominates the equality of errors solution, as the equality constraints not only inflate errors on the lower error groups but also on the groups that start with the highest error. In particular, the maximum group error for the minimax model is approximately 0.325 (with other group errors as low as 0.295), while the error equalizing model forces the errors of all groups to be (approximately) equal values in the range 0.38-0.39. The explanation for the poor performance of the equality of errors solution is straightforward. Once the highest error groups have their error decreased to the minimax value, the only way to achieve equality of errors is by increasing the error on the lower error groups. It happened that the only way to achieve this was by making a model that performed worse on \textit{all} of the groups.

\begin{figure*}
\centering
\hspace{0.0cm}
\includegraphics[width = 0.44\textwidth]{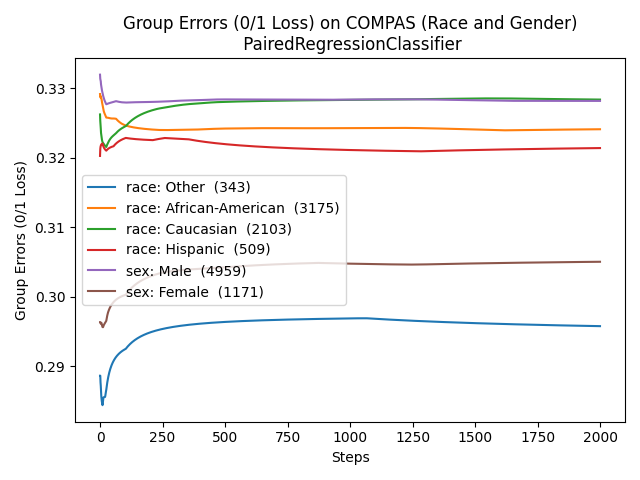}
\hspace{0.0cm}
\includegraphics[width = 0.44\textwidth]{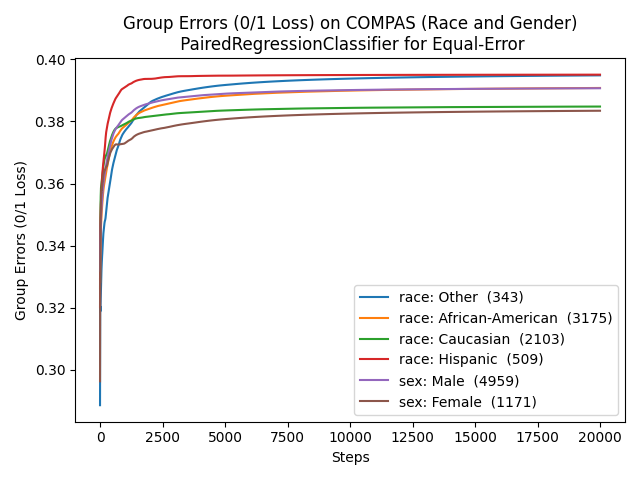}
\captionsetup[subfigure]{labelformat=empty}
\caption{Minimax (left) vs. Equal Errors (right) on COMPAS Dataset with PRC}
\label{fig:COMPAS}
\end{figure*}

\subsubsection{Relaxation and Pareto Curves}

With the potential issues of non-convexity in mind, we move to an experimental analysis of overall error versus max group error tradeoff curves produced by \mmFairCon\ in the classification setting. In Figure~\ref{fig:marketing}, we predict whether or not an individual will subscribe to a term deposit in a Portuguese bank using the Marketing dataset of \cite{Moro, Dua:2019}. In this experiment, we train on log-loss, using it as the error metric for the updates of both the learner and regulator. As the theory dictates, by convexity of the log-loss, we observe an excellent convex tradeoff curve across different values of $\gamma$ when measured by log-loss (shown in the left plot). When we examine the corresponding tradeoff curve with respect to classification loss --- shown in the right plot --- we see that the shape is similar. This indicates that, for this dataset, log-loss is a good surrogate for 0/1 loss, and our fairness guarantees may be realized in practice.
\begin{figure*}[h]
\centering
\includegraphics[width=0.44\textwidth]{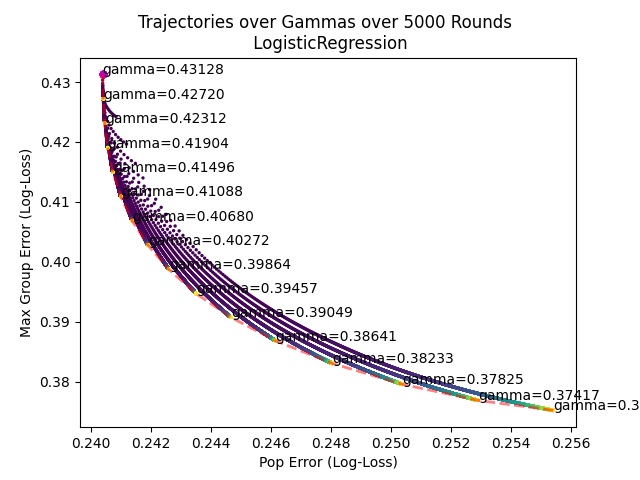}
\includegraphics[width=0.44\textwidth]{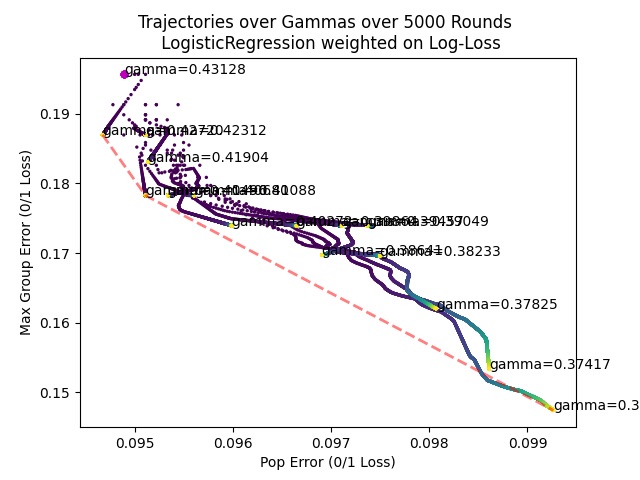}
\captionsetup[subfigure]{labelformat=empty}
\caption{Fairness/Accuracy Tradeoff on Marketing Dataset for Log-Loss (left) and Classification Loss (right)}
\label{fig:marketing}
\end{figure*}

\subsubsection{False Positive and False Negative Rates}

In some contexts, we may care more about false positive or false negative rates than overall error rates. For example, when predicting criminal recidivism (using, for example, the popular \textit{COMPAS} dataset), we may consider a false positive to be more harmful to an individual than a false negative, as it can lead to mistaken incarceration. As mentioned previously, one can trivially achieve a false positive rate of zero using a constant classifier that blindly makes negative predictions on all inputs (minimizing false negatives is analogous). Therefore, achieving minimax group false positive or false negative rates alone is uninteresting. Instead, we care about the tradeoff between overall error and maximum group false positive or false negative rates.

In Figure~\ref{fig:falsePos} we examine this tradeoff on the \textit{COMPAS} dataset with respect to false positive rates. When population error is minimized (indicated with a pink dot), the maximum false positive rate of any group is around ~0.37. As we decrease $\gamma$, we see a gradual increase in population error tracing a nearly-convex tradeoff curve. When the maximum group false positive rate reaches 0, the population error achieved by our algorithm is ~0.44. Importantly, we are not restricted to picking from these extremal points on the curve. Suppose we believed that a maximum group false positive rate of 0.05 was acceptable. Our tradeoff curve shows that we can meet this constraint while achieving a population error of only 0.39.

\begin{figure}
\centering
\includegraphics[width=0.44\textwidth]{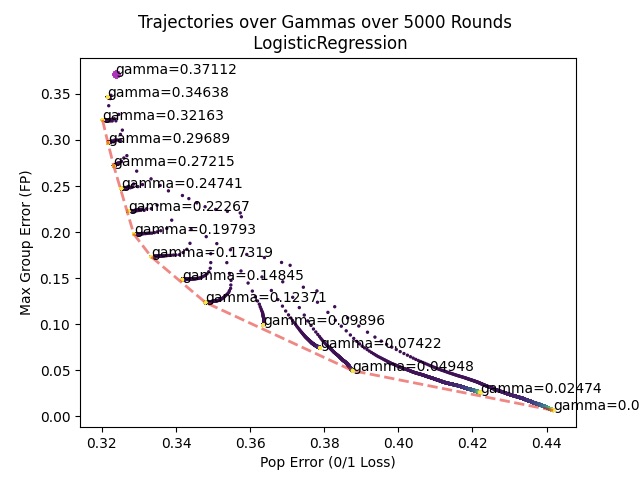}
\captionsetup[subfigure]{labelformat=empty}
\caption{Fairness/Accuracy Tradeoff for FP on COMPAS}
\label{fig:falsePos}
\end{figure}

\subsection{Demonstrating Generalization}
\label{subsec:generalization-experiments}
 In this section, we demonstrate the generalization performance of our minimax algorithm on large, real datasets for both classification and regression. 

\subsubsection{Regression}
In Figure \ref{fig:bike-generalization} we investigate generalization performance of the minimax algorithm in the regression setting using the \textit{Bike} dataset with 25\% of the data withheld for validation. We observe that for all four seasonal groups, the in-sample and out-of-sample errors are nearly identical for both the final minimax model as well as the intermediate randomized models produced by the algorithm. This result serves as a demonstration of the uniform convergence of group errors predicted by the theory in section \ref{subsec:generalization}, and shows that our algorithm is capable of strong generalization even when groups only have a few thousand instances each.
\begin{figure}[h]
\centering
\includegraphics[width=0.44\textwidth]{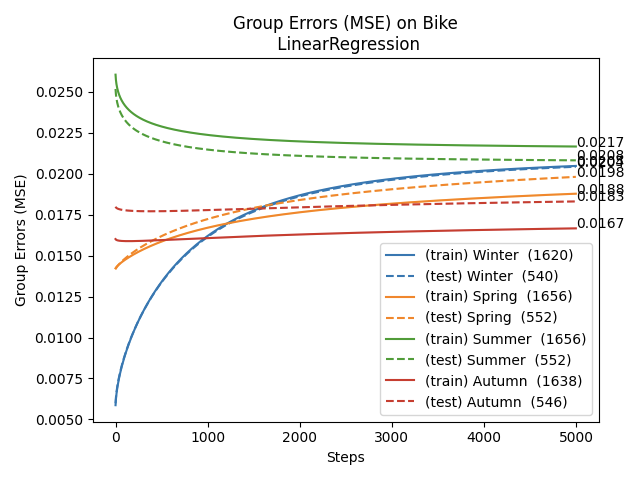}
\caption{Train vs. Test Performance for Minimax on Bike}
\label{fig:bike-generalization}
\end{figure}
 \subsubsection{Classification}
In Figure \ref{fig:kdd-generalization} we examine the generalization performance of the minimax algorithm in the classification setting using the \textit{Internet Traffic} dataset with 25\% of the data withheld for validation. We observe strong generalization with respect to the maximum group error objective. Note the scale of the y-axis: despite the wide separation of the curves, the the out-of-sample maximum group error exceeds the in-sample error by only 0.0007. In particular, we observe that the largest group error in-sample is 0.0022 (on the tcp group) while the largest group error out of sample is 0.0029 (on the icmp group). Moreover, we observed that the differences between in and out-of-sample errors for each group were not only small (with the largest difference having value only 0.0013) but also tended to decrease over the course of the algorithm. That is, the difference between the in and out-of-sample errors for the unweighted model were generally larger than the corresponding differences in for the minimax model. This indicates that any lack of generalisation is a generic property of linear classifiers on the dataset rather than specific of our minimax algorithm. 

\begin{figure}[!h]
\centering
	\includegraphics[width=0.44\textwidth]{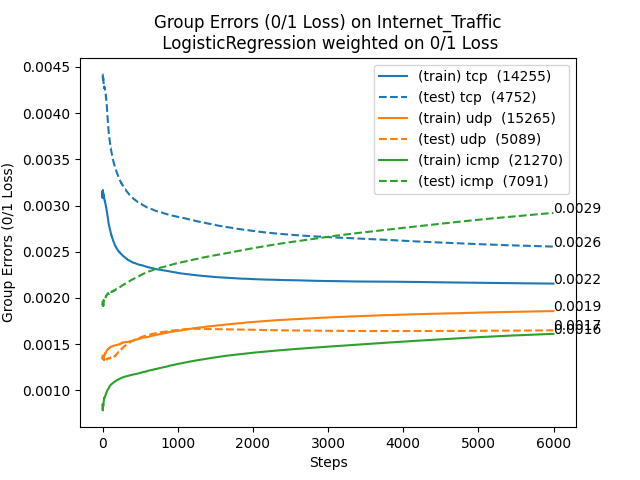}
\captionsetup[subfigure]{labelformat=empty}
\caption{Train vs. Test Performance on Internet Traffic}
\label{fig:kdd-generalization}
\end{figure}

\section{Conclusion}
We have provided a provably convergent, practical algorithm for solving minimax group fairness problems and error minimization problems subject to group-dependent error upper bound constraints. Our meta-algorithm supports any statistical model class making use of sample weights and our methods inherit the generalization guarantees of our base class of models.  As our theory suggests, performance is excellent when we can exactly solve weighted empirical risk minimization problems (as is the case with linear regression and other problems with convex objectives). We provide a thorough discussion of instances in which this assumption is infeasible, notably when a convex surrogate loss function is used in place of 0/1 classification loss. In these instances, our algorithm must be viewed as a principled heuristic. Finally, in regression and classification settings, we demonstrate that our algorithms for optimizing minimax group error results in overall error that is no worse than the error that can be obtained when attempting to equalize error across groups and can be markedly better for some groups compared to an equal error solution. In high stakes settings, this Pareto improvement may be highly desirable, in that it avoids harming any group more than is necessary to reduce the error of the highest error group.

\newpage

\bibliographystyle{plainnat}
\bibliography{arxiv_aies_main}

\begin{thebibliography}{33}
\providecommand{\natexlab}[1]{#1}
\providecommand{\url}[1]{\texttt{#1}}
\expandafter\ifx\csname urlstyle\endcsname\relax
  \providecommand{\doi}[1]{doi: #1}\else
  \providecommand{\doi}{doi: \begingroup \urlstyle{rm}\Url}\fi

\bibitem[Agarwal et~al.(2018)Agarwal, Beygelzimer, Dudík, Langford, and
  Wallach]{agarwal2018reductions}
Alekh Agarwal, Alina Beygelzimer, Miroslav Dudík, John Langford, and Hanna
  Wallach.
\newblock A reductions approach to fair classification.
\newblock In \emph{Proceedings of the 35th International Conference on Machine
  Learning}, 2018.

\bibitem[Agarwal et~al.(2019)Agarwal, Dudik, and Wu]{agarwal2019fair}
Alekh Agarwal, Miroslav Dudik, and Zhiwei~Steven Wu.
\newblock Fair regression: Quantitative definitions and reduction-based
  algorithms.
\newblock In Kamalika Chaudhuri and Ruslan Salakhutdinov, editors,
  \emph{Proceedings of Machine Learning Research}, volume~97, pages 120--129,
  Long Beach, California, USA, 09--15 Jun 2019. PMLR.
\newblock URL \url{http://proceedings.mlr.press/v97/agarwal19d.html}.

\bibitem[Asadpour and Saberi(2010)]{fairdivision}
Arash Asadpour and Amin Saberi.
\newblock An approximation algorithm for max-min fair allocation of indivisible
  goods.
\newblock \emph{SIAM Journal on Computing}, 39\penalty0 (7):\penalty0
  2970--2989, 2010.

\bibitem[Bay et~al.(2000)Bay, Kibler, Pazzani, and Smyth]{kdd}
Stephen~D. Bay, Dennis~F. Kibler, Michael~J. Pazzani, and Padhraic Smyth.
\newblock The uci kdd archive of large data sets for data mining research and
  experimentation, 2000.

\bibitem[Cesa-Bianchi and Lugosi(2006)]{cesa-bianchi_lugosi_2006}
Nicolo Cesa-Bianchi and Gabor Lugosi.
\newblock \emph{Prediction, Learning, and Games}.
\newblock Cambridge University Press, 2006.
\newblock \doi{10.1017/CBO9780511546921}.

\bibitem[Cesa-Bianchi and Lugosi(1999)]{cesa-bianchi1999}
Nicolò Cesa-Bianchi and Gábor Lugosi.
\newblock On prediction of individual sequences.
\newblock \emph{Ann. Statist.}, 27\penalty0 (6):\penalty0 1865--1895, 12 1999.
\newblock \doi{10.1214/aos/1017939242}.
\newblock URL \url{https://doi.org/10.1214/aos/1017939242}.

\bibitem[Chen et~al.(2017)Chen, Lucier, Singer, and Syrgkanis]{chen}
Robert~S. Chen, Brendan Lucier, Yaron Singer, and Vasilis Syrgkanis.
\newblock Robust optimization for non-convex objectives.
\newblock In I.~Guyon, U.~V. Luxburg, S.~Bengio, H.~Wallach, R.~Fergus,
  S.~Vishwanathan, and R.~Garnett, editors, \emph{Advances in Neural
  Information Processing Systems}, volume~30, pages 4705--4714. Curran
  Associates, Inc., 2017.
\newblock URL
  \url{https://proceedings.neurips.cc/paper/2017/file/10c66082c124f8afe3df4886f5e516e0-Paper.pdf}.

\bibitem[Chouldechova et~al.(2018)Chouldechova, Benavides-Prado, Fialko, and
  Vaithianathan]{chouldechova2018case}
Alexandra Chouldechova, Diana Benavides-Prado, Oleksandr Fialko, and Rhema
  Vaithianathan.
\newblock A case study of algorithm-assisted decision making in child
  maltreatment hotline screening decisions.
\newblock In \emph{Conference on Fairness, Accountability and Transparency},
  pages 134--148, 2018.

\bibitem[Cortez and Silva(2008)]{Cortez2008UsingDM}
Paulo Cortez and Alice Silva.
\newblock Using data mining to predict secondary school student performance.
\newblock \emph{EUROSIS}, 01 2008.

\bibitem[Cotter et~al.(2019)Cotter, Jiang, and Sridharan]{cotter}
Andrew Cotter, Heinrich Jiang, and Karthik Sridharan.
\newblock Two-player games for efficient non-convex constrained optimization.
\newblock In Aurélien Garivier and Satyen Kale, editors, \emph{Proceedings of
  the 30th International Conference on Algorithmic Learning Theory}, volume~98
  of \emph{Proceedings of Machine Learning Research}, pages 300--332, Chicago,
  Illinois, 22--24 Mar 2019. PMLR.
\newblock URL \url{http://proceedings.mlr.press/v98/cotter19a.html}.

\bibitem[Dua and Graff(2017)]{Dua:2019}
Dheeru Dua and Casey Graff.
\newblock {UCI} machine learning repository, 2017.
\newblock URL \url{http://archive.ics.uci.edu/ml}.

\bibitem[Dwork et~al.(2018)Dwork, Immorlica, Kalai, and Leiserson]{decoupled1}
Cynthia Dwork, Nicole Immorlica, Adam~Tauman Kalai, and Max Leiserson.
\newblock Decoupled classifiers for group-fair and efficient machine learning.
\newblock In \emph{Conference on Fairness, Accountability and Transparency},
  pages 119--133, 2018.

\bibitem[E and Cho(2020)]{doi:10.1080/22797254.2020.1725789}
Sathishkumar~V E and Yongyun Cho.
\newblock A rule-based model for {Seoul} bike sharing demand prediction using
  weather data.
\newblock \emph{European Journal of Remote Sensing}, pages 1--18, 2020.
\newblock \doi{10.1080/22797254.2020.1725789}.

\bibitem[E et~al.(2020)E, Park, and Cho]{E2020353}
Sathishkumar~V E, Jangwoo Park, and Yongyun Cho.
\newblock Using data mining techniques for bike sharing demand prediction in
  metropolitan city.
\newblock \emph{Computer Communications}, 153:\penalty0 353 -- 366, 2020.
\newblock ISSN 0140-3664.
\newblock \doi{https://doi.org/10.1016/j.comcom.2020.02.007}.
\newblock URL
  \url{http://www.sciencedirect.com/science/article/pii/S0140366419318997}.

\bibitem[Freund and Schapire(1996)]{Freund}
Yoav Freund and Robert~E. Schapire.
\newblock Game theory, on-line prediction and boosting.
\newblock In \emph{Proceedings of the Ninth Annual Conference on Computational
  Learning Theory}, 1996.

\bibitem[Freund and Schapire(1997)]{10.1006/jcss.1997.1504}
Yoav Freund and Robert~E Schapire.
\newblock A decision-theoretic generalization of on-line learning and an
  application to boosting.
\newblock \emph{J. Comput. Syst. Sci.}, 55\penalty0 (1):\penalty0 119–139,
  August 1997.
\newblock ISSN 0022-0000.
\newblock \doi{10.1006/jcss.1997.1504}.
\newblock URL \url{https://doi.org/10.1006/jcss.1997.1504}.

\bibitem[Hahne(1991)]{scheduling}
Ellen~L. Hahne.
\newblock Round-robin scheduling for max-min fairness in data networks.
\newblock \emph{IEEE Journal on Selected Areas in communications}, 9\penalty0
  (7):\penalty0 1024--1039, 1991.

\bibitem[Kearns et~al.(2018)Kearns, Neel, Roth, and Wu]{Kearns}
Michael Kearns, Seth Neel, Aaron Roth, and Zhiwei~Steven Wu.
\newblock Preventing fairness gerrymandering: Auditing and learning for
  subgroup fairness.
\newblock In \emph{Proceedings of the 35th International Conference on Machine
  Learning}. Stockholm, Sweden, PMLR 80, 2018.

\bibitem[Kearns et~al.(2019)Kearns, Neel, Roth, and Wu]{subgroup2}
Michael Kearns, Seth Neel, Aaron Roth, and Zhiwei~Steven Wu.
\newblock An empirical study of rich subgroup fairness for machine learning.
\newblock In \emph{Proceedings of the Conference on Fairness, Accountability,
  and Transparency}, pages 100--109, 2019.

\bibitem[Kearns and Vazirani(1994)]{kearnsVazirani}
Michael~J. Kearns and Umesh~V. Vazirani.
\newblock \emph{An Introduction to Computational Learning Theory}.
\newblock MIT Press, Cambridge, MA, USA, 1994.
\newblock ISBN 0262111934.

\bibitem[Lahoti et~al.(2020)Lahoti, Beutel, Chen, Lee, Prost, Thain, Wang, and
  Chi]{lahoti2020fairness}
Preethi Lahoti, Alex Beutel, Jilin Chen, Kang Lee, Flavien Prost, Nithum Thain,
  Xuezhi Wang, and Ed~H. Chi.
\newblock Fairness without demographics through adversarially reweighted
  learning.
\newblock In \emph{Advances in Neural Information Processing Systems}, 2020.

\bibitem[Martinez et~al.(2020)Martinez, Bertran, and Sapiro]{Martinez}
Natalie Martinez, Martin Bertran, and Guillermo Sapiro.
\newblock Minimax {Pareto} fairness: A multi objective perspective.
\newblock In \emph{Proceedings of the 37th International Conference on Machine
  Learning}. Vienna, Austria, PMLR 119, 2020.

\bibitem[Moro et~al.(2014)Moro, Cortez, and Rita]{Moro}
S.~Moro, P.~Cortez, and P.~Rita.
\newblock A data-driven approach to predict the success of bank telemarketing.
\newblock \emph{Decis. Support Syst.}, 62:\penalty0 22--31, 2014.

\bibitem[ProPublica(2020)]{compas}
ProPublica.
\newblock \emph{COMPAS Recidivism Risk Score Data and Analysis}.
\newblock Broward County Clerk’s Office, Broward County Sherrif's Office,
  Florida Department of Corrections, ProPublica, September 2020.
\newblock URL
  \url{https://www.propublica.org/datastore/dataset/compas-recidivism-risk-score-data-and-analysis}.

\bibitem[Redmond and Baveja(2002)]{REDMOND2002660}
Michael Redmond and Alok Baveja.
\newblock A data-driven software tool for enabling cooperative information
  sharing among police departments, 2002.
\newblock ISSN 0377-2217.
\newblock URL
  \url{http://www.sciencedirect.com/science/article/pii/S0377221701002648}.

\bibitem[Samadi et~al.(2018)Samadi, Tantipongpipat, Morgenstern, Singh, and
  Vempala]{samadi2018price}
Samira Samadi, Uthaipon Tantipongpipat, Jamie~H Morgenstern, Mohit Singh, and
  Santosh Vempala.
\newblock The price of fair {PCA}: One extra dimension.
\newblock In \emph{Advances in Neural Information Processing Systems}, pages
  10976--10987, 2018.

\bibitem[U.~S. Department~of Commerce(1900)]{census1}
Bureau of the~Census U.~S. Department~of Commerce.
\newblock Census of population and housing 1990 {United States}: Summary tape
  file 1a \& 3a (computer files), 1900.

\bibitem[U.S. Department Of~Commerce(1992)]{census2}
Bureau Of The Census~Producer U.S. Department Of~Commerce, 1992.

\bibitem[U.S. Department~of Justice(1992)]{census3}
Bureau of Justice~Statistics U.S. Department~of Justice, 1992.

\bibitem[U.S. Department~of Justice(1995)]{census4}
Federal Bureau of~Investigation U.S. Department~of Justice.
\newblock Crime in the {United States} (computer file), 1995.

\bibitem[Ustun et~al.(2019)Ustun, Liu, and Parkes]{decoupled2}
Berk Ustun, Yang Liu, and David Parkes.
\newblock Fairness without harm: Decoupled classifiers with preference
  guarantees.
\newblock In \emph{International Conference on Machine Learning}, pages
  6373--6382, 2019.

\bibitem[Vapnik and Chervonenkis(1971)]{vapnik}
V.~N. Vapnik and A.Y. Chervonenkis.
\newblock Chervonenkis: On the uniform convergence of relative frequencies of
  events to their probabilities.
\newblock \emph{Theory of Probability and its Applications}, 16, 1971.
\newblock URL \url{https://doi.org/10.1137/1116025}.

\bibitem[Zinkevich(2003)]{zinkevich}
Martin Zinkevich.
\newblock Online convex programming and generalized infinitesimal gradient
  ascent.
\newblock In \emph{Proceedings of the Twentieth International Conference on
  Machine Learning}. Washington, DC, 2003.

\end{thebibliography}

\newpage
\appendix

\section{Update Rules in \mmFairCon\ for Overlapping Groups}
\label{sec:appendix}

In the original presentation of \mmFairCon, we assumed that the groups were disjoint for ease of presentation. Here we provide a more general derivation of the update rules for the learner and regulator when groups are allowed to be overlapping. The settings and notation of Section~\ref{sec:prelim} apply, but now the groups $\{G_k\}_{k=1}^{K}$ are not necessarily disjoint. We want to minimize population error while bounding the error of each group by $\gamma$. So, our constrained optimization problem is:
\begin{mini}
    {h \in H}{\epsilon(h)}{}{}
    \addConstraint{\epsilon_k(h)}{\leq \gamma,\:}{ k=1,\ldots,K}
\end{mini}

Then, the corresponding Lagrangian dual is:
\begin{align*}
    &F(\lambda,h) = \epsilon(h) + \sum_{k} \lambda_k (\epsilon_k(h) - \gamma)\\
    &=\frac{1}{n}\sum_i L(h(x_i),y_i) \\
    &+\sum_k \lambda_k \left(\frac{1}{|G_k|}\sum_i L(h(x_i),y_i)\mathbb{I}\{(x_i,y_i) \in G_k\} - \gamma\right) \\
    &= \frac{1}{n} \sum_i L(h(x_i),y_i) \\
    &+ \sum_i \left( \sum_k \frac{\lambda_k}{|G_k|} L(h(x_i),y_i) \mathbb{I}\{(x_i,y_i) \in G_k\}\right) - \sum_k \lambda_k \gamma \\
    &= \sum_i L(h(x_i),y_i)\left(\frac{1}{n}+ \sum_k \frac{\lambda_j}{|G_k|}\mathbb{I}\{(x_i,y_i) \in G_k\}\right) \\
    &- \sum_k \lambda_k \gamma \\
    &= \sum_i w_i L(h(x_i),y_i) - \sum_k \lambda_k \gamma 
\end{align*}

Where $w_i = \frac{1}{n} + \sum_k \frac{\lambda_k}{|G_k|}\mathbb{I}\{(x_i,y_i) \in G_k\}$ and $$\frac{\delta F}{\delta \lambda_k} = \frac{1}{|G_k|} \sum_i \mathbb{I}\{(x_i,y_i) \in G_k\} L(h(x_i),y_i)) - \gamma$$

We can then use the sample weights $w$ in the learner's step of the constrained optimization problem. In particular, at each round $t$, the learner will find $h_t := \text{argmin}_{h \in H} \sum_i w_i \cdot L(h(x_i),y_i)$
and the regulator will update the $\lambda$ vector with
$$\lambda_k := \left(\lambda_k + \eta_t \cdot \left(\frac{1}{|G_k|} \sum_i \mathbb{I}\{(x_i,y_i) \in G_k\} L(h(x_i),y_i) - \gamma\right)\right)_+$$

\newpage
\section{Proof and Discussion of Generalization Theorem}
Here, for completeness, we provide a brief proof for Theorem~\ref{thm:generalization}.

\begin{proof} Fix any group $i$.
A standard uniform convergence argument tells us that with probability $1-\delta$ over the $n_i$ samples from group $i$, for every (deterministic) $h_j \in H$, the generalization gap is of order $\epsilon_i = O\left(\sqrt{\frac{\log{\frac{1}{\delta}} + d \log n_i}{n_i}}\right)$\cite{vapnik, kearnsVazirani}:
 
\begin{align*}
|\epsilon_i(h_j) - \epsilon'_i(h_j) | \leq \epsilon_i = O\left(\sqrt{\frac{\log{\frac{1}{\delta}} + d \log n_i}{n_i}}\right)
\end{align*}

Now consider the randomized model $\tilde{h}:= \sum_j \alpha_j h_j$ with weights $\alpha_j$ on deterministic models $h_j \in H$ such that $\sum_j \alpha_j = 1$. Note that $\tilde{h} \in H$ by convexity of $H$. Then, the average error of $\tilde{h}$ on group $i$ will be the weighted sum of model errors on group $i$ both in and out of sample:
\begin{align*}
\begin{split}
\epsilon_i(\tilde{h})=\sum_j \alpha_j \epsilon_i(h_j)
\end{split}
\begin{split}
\epsilon'_i(\tilde{h})=\sum_j \alpha_j \epsilon'_i(h_j)
\end{split}
\end{align*}
Finally, we bound each expected difference pointwise by $\epsilon_i$. With probability $1-\delta$,
\begin{align*}
&|\epsilon_i(\tilde{h}) - \epsilon'_i(\tilde{h}) | 
=\sum_j \alpha_j |\epsilon_i(h_j) - \epsilon'_i(h_j)|\\
&= \sum_j \alpha_j \epsilon_i \leq \epsilon_i =  O\left(\sqrt{\frac{\log{\frac{1}{\delta}} + d \log n_i}{n_i}}\right)
\end{align*}

\end{proof}
\label{sec:generalization_appendix}

\newpage
\section{Convex and Non-Convex Surface Example}
In Figure~\ref{fig:nonconvex}, we compare a log-loss surface with the corresponding 0/1 loss. In experiments, we optimize log-loss, but plot 0/1 loss. We are provably optimizing the log-loss, but our guarantees with respect to 0/1 loss must be evaluated empirically.

\begin{figure}[h]
\captionsetup[subfigure]{labelformat=empty}
\centering
\caption{Comparing Convexity of 0/1 Loss vs. Log-Loss}
\subfloat[Non-Convex Classification Loss Surface]{
\includegraphics[width=0.5\textwidth]{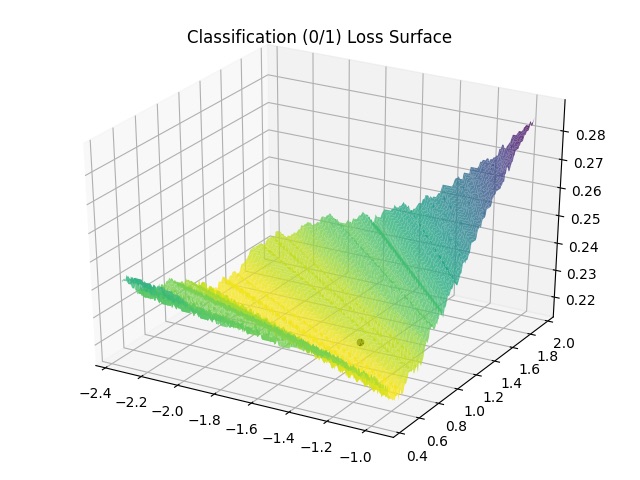}}
\hspace{0.0cm}
\subfloat[Convex Log-Loss Surface]{
\includegraphics[width=0.5\textwidth]{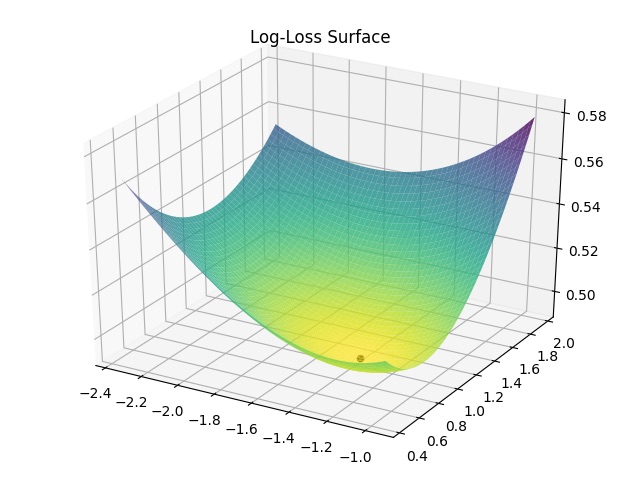}}
\label{fig:nonconvex}
\end{figure}

\label{sec:nonconvex}
\end{document}